\declaretheoremstyle[%
headfont=\normalfont\bfseries,
notefont=\mdseries,
notebraces={(}{)},
bodyfont=\normalfont,
postheadspace=1ex,
qed=\qedsymbol%
]{mystyle}
\declaretheorem[style=mystyle,
                name=Theorem,
                refname={theorem,theorems},
                Refname={Theorem,Theorems}
]{thm}
\declaretheorem[style=mystyle,
                name=Assumption,
                refname={assumption,assumptions},
                Refname={Assumption,Assumptions}
]{assumption}
\newlist{thmlist}{enumerate}{1}
\setlist[thmlist]{label=\textbf{(\roman{*})},
  ref=\thethm(\roman{*}),noitemsep}
\newlist{lemlist}{enumerate}{1}
\setlist[lemlist]{label=\textbf{(\roman{*})},
  ref=\thelemma(\roman{*}),noitemsep}
\newlist{exlist}{enumerate}{1}
\setlist[exlist]{label=\textbf{(\roman{*})},
  ref=\theexample(\roman{*}),noitemsep}
\newlist{factlist}{enumerate}{1}
\setlist[factlist]{label=\textbf{(\roman{*})},
  ref=\thefact(\roman{*}),noitemsep}
\newlist{proplist}{enumerate}{1}
\setlist[proplist]{label=\textbf{(\roman{*})},
  ref=\theprop(\roman{*}),noitemsep}
\newlist{asslist}{enumerate}{1}
\setlist[asslist]{label=\textbf{(\roman{*})},
  ref=\theassumption(\roman{*}),noitemsep}
\newlist{deflist}{enumerate}{1}
\setlist[deflist]{label=\textbf{(\roman{*})},
  ref=\thedefinition(\roman{*}),noitemsep}
\newlist{algolist}{enumerate}{1}
\setlist[algolist]{label=\textbf{(\roman{*})},
  ref=\thealgo(\roman{*}),noitemsep}
\newlist{claimlist}{enumerate}{1}     
\setlist[claimlist]{label=\textbf{(\roman{*})},
  ref=\theclaim(\roman{*}),noitemsep}
\newlist{applist}{enumerate}{1}
\setlist[applist]{label=\textbf{(\roman{*})},
  ref=\thesection(\roman{*}),noitemsep}
\crefname{thm}{Theorem}{Theorems}
\crefname{prop}{Proposition}{Propositions}
\crefname{assumption}{Assumption}{Assumptions}
\crefname{lemma}{Lemma}{Lemmata}
\crefname{definition}{Definition}{Definitions}
\crefname{example}{Example}{Examples}
\crefname{algo}{Algorithm}{Algorithms}
\crefname{fact}{Fact}{Facts}
\crefname{claim}{Claim}{Claims}
\crefname{appendix}{Appendix}{Appendices}
\crefname{coroll}{Corollary}{Corollaries}
\crefname{figure}{Figure}{Figures}
\crefname{section}{Section}{Sections}
\crefname{thmlisti}{Theorem}{Theorems}
\crefname{lemlisti}{Lemma}{Lemmata}
\crefname{proplisti}{Proposition}{Propositions}
\crefname{asslisti}{Assumption}{Assumptions}
\crefname{deflisti}{Definition}{Definitions}
\crefname{exlisti}{Example}{Examples}
\crefname{algolisti}{Algorithm}{Algorithms}
\crefname{factlisti}{Fact}{Facts}
\crefname{claimlisti}{Claim}{Claims}
\crefname{applisti}{Appendix}{Appendices}
\newcommand{\Real}{\mathbb{R}}
\newcommand{\RealPP}{\mathbb{R}_{>0}}
\newcommand\given{{\mathbin{}\mid\mathbin{}}}
\newcommand\vect[1]{\mathbf{#1}}
\newcommand\limas{\xrightarrow{\text{a.s.}}}
\providecommand\given{} 
\newcommand\SetSymbol[1][]{
  \nonscript\,#1\vert \allowbreak \nonscript\,\mathopen{}}
\DeclarePairedDelimiterX\Set[1]{\lbrace}{\rbrace}%
{\renewcommand\given{\SetSymbol[\delimsize]} #1 }
\DeclarePairedDelimiterX\innerp[2]{\langle}{\rangle}{#1
  \mathop{}\delimsize\vert\mathop{} #2}
\DeclarePairedDelimiterX\norm[1]\lVert\rVert{\ifblank{#1}{\:\cdot\:}{#1}}
\DeclareMathOperator{\domain}{Dom}
\DeclareMathOperator{\trace}{Trace}
\DeclareMathOperator{\Expect}{\mathbb{E}}
\DeclareMathOperator{\Prox}{Prox}
\DeclareMathOperator*{\Argmin}{arg\,min}
\newcommand*{\ie}{%
  \@ifnextchar{,}%
  {\textit{i.e.}}%
  {\textit{i.e.,}\@\xspace}%
}
\newcommand*{\eg}{%
  \@ifnextchar{,}%
  {\textit{e.g.}}%
  {\textit{e.g.,}\@\xspace}%
}
\newcommand*{\etc}{%
  \@ifnextchar{.}%
  {\textit{etc}}%
  {\textit{etc.}\@\xspace}%
}
\newcommand*{\etal}{%
  \@ifnextchar{.}%
  {\textit{et al}}%
  {\textit{et al.}\@\xspace}%
}
\newcommand*{\cf}{%
  \@ifnextchar{.}%
  {\textit{cf}}%
  {\textit{cf.}\@\xspace}%
}
\newcommand*{\aka}{%
  \@ifnextchar{,}%
  {\textit{a.k.a.}}%
  {\textit{a.k.a.}\@\xspace}%
}
\title{Robust Hierarchical-Optimization RLS\\
Against Sparse Outliers}
\author{Konstantinos~Slavakis and Sinjini~Banerjee%
  \thanks{K.~Slavakis (contact author) and S.~Banerjee are with the Dept.\ of
    Electrical Engineering, University at Buffalo, The State University of
    New~York, Buffalo, NY 14260, USA (e-mail:~\mbox{kslavaki@buffalo.edu}).}}
\date{}
\begin{document}
\maketitle

\begin{abstract}
  This paper fortifies the recently introduced hierarchical-optimization
  recursive least squares (HO-RLS) against outliers which contaminate
  infrequently linear-regression models. Outliers are modeled as nuisance
  variables and are estimated together with the linear filter/system variables
  via a sparsity-inducing (non-)convexly regularized least-squares task. The
  proposed outlier-robust HO-RLS builds on steepest-descent directions with a
  constant step size (learning rate), needs no matrix inversion (lemma),
  accommodates colored nominal noise of known correlation matrix, exhibits small
  computational footprint, and offers theoretical guarantees, in a probabilistic
  sense, for the convergence of the system estimates to the solutions of a
  hierarchical-optimization problem: Minimize a convex loss, which models
  a-priori knowledge about the unknown system, over the minimizers of the
  classical ensemble LS loss. Extensive numerical tests on synthetically
  generated data in both stationary and non-stationary scenarios showcase
  notable improvements of the proposed scheme over state-of-the-art techniques.
\end{abstract}


\section{Introduction}\label{Sec:Intro}

The recursive least squares (RLS) has been a pivotal method in solving LS
problems in adaptive filtering and system identification~\cite{SayedBook}, with
a reach that extends also into contemporary learning tasks, such as solving
large-scale LS problems in online learning, \eg, \cite{ELM:05}. Nevertheless,
the performance of RLS (LS estimators in general) deteriorates in the presence
of outliers, \ie, data or noise not adhering to a nominal data-generation
model~\cite{Rousseeuw:book.87}. This work focuses on outliers that contaminate
infrequently data models, \eg, impulse noise~\cite{Fano:Impulse.noise:77,
  Lin:impulse.noise:13, Nikolova:04}.

Methods that strengthen RLS against outliers have been reported
in~\cite{Petrus:99, RLM:00, Chan.RLM:04, ReyVega.RobustRLS:09, Bhotto:11,
  Farahmand.robustRLS.12, Xiao.robustRLS.15}. Propelled by robust-regression
arguments~\cite{Rousseeuw:book.87}, studies \cite{Petrus:99, RLM:00,
  Chan.RLM:04} utilize M-estimate losses instead of typical LS ones to penalize
system-output errors. The \textit{non-}recursive algorithm of \cite{Petrus:99}
employs Huber's loss, while the recursive schemes of \cite{RLM:00} and
\cite{Chan.RLM:04} build on Hampel's three-part redescending
objective~\cite{Rousseeuw:book.87} and a modified Huber's loss,
respectively. Numerical tests show that \cite{Petrus:99} outperforms median
filters~\cite{Gallagher:Median:81}, a classical solution to mitigate impulse
noise, while \cite{RLM:00, Chan.RLM:04} appear to be more effective than
order-statistics techniques~\cite{Settineri:OSFKF:96}. Notwithstanding, noise
and error-filtering statistics need to be known a-priori to set the parameters
of Hampel's objective in~\cite{RLM:00}, while the solutions of~\cite{RLM:00,
  Chan.RLM:04} to the M-estimate normal equations are built on the assumption
that filter's estimates do not change significantly for certain amounts of
time~\cite{ReyVega.RobustRLS:09}. Studies~\cite{ReyVega.RobustRLS:09, Bhotto:11}
update filter's estimates by minimizing weighted LS error costs subject to
``ball'' constraints to prevent from large perturbations which may be inflicted
by outliers: A Euclidean-ball constraint onto filter estimates is advocated
by~\cite{ReyVega.RobustRLS:09}, while an $\ell_1$-ball constraint onto the RLS
gain vectors is utilized in~\cite{Bhotto:11}. Numerical tests demonstrate the
improved performance of \cite{ReyVega.RobustRLS:09, Bhotto:11}
over~\cite{RLM:00, Chan.RLM:04}.

Outliers are modeled as nuisance variables and are jointly estimated with the
filter's coefficients in~\cite{Farahmand.robustRLS.12, Xiao.robustRLS.15}. To
address identifiability issues in the estimation task due to the ever-growing
number of the outlier unknowns with recursions (time instances), filter and
outlier vectors are updated per recursion via the minimization of an LS data-fit
loss plus a surrogate of the $\ell_0$-norm of the outlier vector to exploit
sparsity: The $\ell_1$-norm is utilized in \cite{Farahmand.robustRLS.12}, while
non-convex surrogates in~\cite{Xiao.robustRLS.15}. The composite minimization
task is solved in an alternating fashion: First, the outlier vector is updated,
and then the classical RLS is used to update the filter's estimates. Methods
\cite{Farahmand.robustRLS.12, Xiao.robustRLS.15} accommodate also colored
nominal noise of known correlation matrix to avoid any prewhitening that would
spread the outliers in the nominal data, adding further complication to the
challenge of outlier removal. Numerical tests show the improved performance of
\cite{Farahmand.robustRLS.12, Xiao.robustRLS.15}
over~\cite{ReyVega.RobustRLS:09, Bhotto:11}.

This short paper follows the path of~\cite{Farahmand.robustRLS.12,
  Xiao.robustRLS.15} to model outliers as nuisance variables, but employs the
recently introduced hierarchical(-optimization) recursive least squares
(HO-RLS)~\cite{SFMHSDM} to update filter coefficients instead of the classical
RLS. Unlike RLS, which propels all of~\cite{Petrus:99, RLM:00, Chan.RLM:04,
  ReyVega.RobustRLS:09, Bhotto:11, Farahmand.robustRLS.12, Xiao.robustRLS.15},
HO-RLS provides a way to quantify side information about the system since it
solves a hierarchical-optimization problem: Minimize a convex loss, which models
the available side information, over the minimizers of the classical ensemble LS
data-fit loss. The proposed outlier-robust HO-RLS builds on steepest-descent
directions with a constant step size (learning rate), needs \textit{no}\/ matrix
inversion (lemma), exhibits similar computational complexity with the
implementations in~\cite{Farahmand.robustRLS.12, Xiao.robustRLS.15},
accommodates colored noise of known covariance matrix without any prewhitening,
and offers theoretical guarantees, in a probabilistic sense, for the convergence
of the filter/system estimates to the solution of the aforementioned
hierarchical-optimization task. Extensive numerical tests on synthetically
generated data, in both stationary and non-stationary scenarios, showcase
notable improvements of the proposed scheme over the
state-of-the-art~\cite{Farahmand.robustRLS.12, Xiao.robustRLS.15}.

\section{The Problem  and State-Of-The-Art Solutions}\label{Sec:Problem.PriorArt}

With the positive integer $n$ denoting both discrete time and recursion index,
the following data model is considered:
\begin{align}
  \vect{y}_n = \vect{F}_* \vect{x}_n + \vect{o}_n + \vect{v}_n
  \,, \label{data.model}
\end{align}
where the $L\times P$ matrix $\vect{F}_*$ is the wanted filter/system, the
$L\times 1$ vector $\vect{y}_n$ collects the output data, the $P\times 1$ vector
$\vect{x}_n$ gathers the input ones, vector $\vect{o}_n$ models the outlier
data, $\vect{v}_n$ stands for zero-mean (colored) noise with correlation matrix
$\vect{R}_{vv} \coloneqq \Expect\{\vect{v}_n \vect{v}_n^{\intercal}\}$ that is
assumed to stay constant $\forall n$, $\Expect\{\cdot\}$ is the expectation
operator with respect to (w.r.t.) a probability space~\cite{Williams.book.91},
and $\intercal$ denotes vector/matrix transposition. The
multiple-input-multiple-output model \eqref{data.model} is chosen here to offer
a general model that is able to capture data-generation mechanisms in numerous
modern application domains, \eg, \cite{Bereyhi:SPAWC:19,
  Benesty:Acoustic.MIMO:04} and \cite[p.~647]{SayedBook}. To save space,
$\vect{y}_n$, $\vect{x}_n$, $\vect{o}_n$ and $\vect{v}_n$ represent both random
variables (RVs) and their realizations. Furthermore, equality in
\eqref{data.model} is assumed to hold almost surely (a.s.) w.r.t.\ the
underlying probability space. Since outliers $(\vect{o}_n)_{n\geq 1}$ are
assumed to appear infrequently in \eqref{data.model}, vector $\vect{o}_n$ can be
considered to be sparse, \ie, most of its entries are zero $\forall n$. The
input-output data become available to the user sequentially: The pair
$(\vect{x}_n, \vect{y}_n)$ is revealed to the user at time $n$. The problem
under consideration is to devise an iterative algorithm to estimate/learn
$\vect{F}_*$ from the available $(\vect{x}_{\nu}, \vect{y}_{\nu})_{\nu = 1}^n$,
$\forall n$, and $\vect{R}_{vv}$.

For every $n$, the typical LS estimator
\begin{align}
  ( \hat{\vect{F}}_{n+1}, \{\hat{\vect{o}}_{\nu, n}\}_{\nu=1}^n ) 
  \in \Argmin_{ (\bm{F}, \{\bm{o}_{\nu}\}_{\nu=1}^n) } \sum_{\nu=1}^n
  \norm{ \vect{y}_{\nu} - \bm{o}_{\nu} -
  \bm{F}\vect{x}_{\nu}}_{\vect{R}_{vv}^{-1}}^2\,, \label{standard.est.problem}
\end{align}
does not offer significant help, since the trivial
$\hat{\vect{F}}_{n+1} \coloneqq \vect{0}$ and
$\{\hat{\vect{o}}_{\nu, n} \coloneqq \vect{y}_{\nu}\}_{\nu=1}^n$ qualify as
solutions to \eqref{standard.est.problem}. Motivated by classical arguments,
\eg, \cite[\S29.6]{SayedBook}, the weighted norm
$\norm{\vect{a}}_{\vect{R}_{vv}^{-1}}^2 \coloneqq \vect{a}^{\intercal}
\vect{R}_{vv}^{-1} \vect{a}$ is introduced in \eqref{standard.est.problem} to
handle entries of the error vector unequally via the available
$\vect{R}_{vv}^{-1}$. To avoid trivial solutions, a popular way is to form a
regularized LS estimation task
\begin{align}
  ( \hat{\vect{F}}_{n+1}, \{\hat{\vect{o}}_{\nu, n}\}_{\nu=1}^n ) 
    \in \Argmin_{ (\bm{F}, \{\bm{o}_{\nu}\}_{\nu=1}^n) } \sum_{\nu=1}^n
  \left[ \tfrac{1}{2} \norm{ \vect{y}_{\nu} - \bm{F}\vect{x}_{\nu} -
  \bm{o}_{\nu}}_{\vect{R}_{vv}^{-1}}^2 + \lambda_{\nu} \rho(\bm{o}_{\nu})
    \right] \label{Offline.task} \,,
\end{align}
where $\rho(\cdot)$ is a sparsity-inducing loss with user-defined weights
$\lambda_{\nu} > 0$. However, this path seems to be impractical since the
$LP + nL$ number of unknown variables at time $n$ raises insurmountable
computational obstacles when solving \eqref{Offline.task} at large time
instances $n$.

To address this ``curse of dimensionality,'' studies
\cite{Farahmand.robustRLS.12, Xiao.robustRLS.15} adopt the following recursions
for all $n \geq n_0+1$,
\begin{subequations}\label{ORRLS}
  \begin{align}
    & \hat{\vect{o}}_n \in \Argmin_{\bm{o}} \tfrac{1}{2} \norm{ \vect{y}_n -
      \hat{\vect{F}}_n \vect{x}_n - \bm{o}}_{\vect{R}_{vv}^{-1}}^2 +
      \lambda_n \rho(\bm{o})\,, \label{Prior.Art.o} \\
    & \hat{\vect{F}}_{n+1} \coloneqq \text{RLS} \left( \hat{\vect{F}}_n,
      (\vect{x}_n, \vect{y}_n - \hat{\vect{o}}_n) \right)\,, \label{Prior.Art.F}
  \end{align}
\end{subequations}
where $n_0$ is a user-defined time instance,
$\text{RLS}(\hat{\vect{F}}_n, (\vect{x}_n, \vect{y}_n - \hat{\vect{o}}_n))$
denotes the classical RLS updates, with the newly introduced data pair at time
$n$ being $(\vect{x}_n, \vect{y}_n - \hat{\vect{o}}_n)$, and with
$\vect{R}_{vv}^{-1}$ incorporated in the RLS formulae. A warm start is achieved
by solving the offline task \eqref{Offline.task} for
$( \hat{\vect{F}}_{n_0}, \{\hat{\vect{o}}_{\nu}\}_{\nu=1}^{n_0-1} )$, where
$n_0-1$ is used in the place of $n$. Loss $\rho(\cdot)$ takes the form of the
$\ell_1$-norm in \cite{Farahmand.robustRLS.12}, rendering \eqref{Offline.task}
and \eqref{Prior.Art.o} typical LASSO tasks~\cite{Hastie:book:09}, while
non-convex $\rho(\cdot)$ are promoted in \cite{Xiao.robustRLS.15} and non-convex
optimization solvers, \eg, \cite{GIST:13}, are required to solve \eqref{Prior.Art.o}.

\section{The Proposed Algorithm}\label{Sec:Proposed}

This work follows~\cite{Farahmand.robustRLS.12, Xiao.robustRLS.15}, but instead
of the classical RLS in \eqref{Prior.Art.F}, the recently introduced
hierarchical-optimization recursive least squares (HO-RLS)~\cite{SFMHSDM} is
used. In the current context, HO-RLS solves the following HO task: Given the
convex function $g: \Real^L \to \Real \cup \{+\infty\}$, which is generally
non-differentiable, find
\begin{align}
  \Argmin\nolimits_{\bm{F}}\ g(\bm{F})\
  \text{s.to}\ \bm{F}\in \Argmin_{\bm{F}'} \Expect \left\{ \tfrac{1}{2}
                 \norm{\vect{y}_n - \vect{o}_n -
                 \bm{F}' \vect{x}_n}_{\vect{R}_{vv}^{-1}}^2 
                 \right\} \,. \label{Goal}  
\end{align}
Unlike the RLS in \eqref{Prior.Art.F}, $g(\cdot)$ is able to quantify any
a-priori knowledge (side information) about the system. For example, if
$\vect{F}_*$ is known to be sparse, $g(\cdot) \coloneqq \norm{\cdot}_1$ can be
used to promote sparse solutions in \eqref{Goal}. To approximate the expectation
in \eqref{Goal}, the following sample-average loss is adopted:
$\forall n\geq n_0$,
$l_n(\vect{F}) \coloneqq [1/(2\Gamma_n)] \sum_{\nu=n_0}^n \gamma^{n-\nu}
\norm{\vect{y}_{\nu} - \hat{\vect{o}}_{\nu} - \vect{F}
  \vect{x}_{\nu}}_{\vect{R}_{vv}^{-1}}^2$, with a ``forgetting factor''
$\gamma\in (0,1]$ to mimic the classical exponentially-weighted RLS
scheme~\cite[\S30.6]{SayedBook}, and
$\Gamma_n \coloneqq \sum_{\nu=n_0}^{n} \gamma^{n-\nu}$. The outlier vector
$\vect{o}_n$ is replaced by its estimate $\hat{\vect{o}}_n$ in $l_n(\cdot)$.

Being an offspring of the stochastic Fej{\'e}r-monotone hybrid steepest-descent
method~\cite{SFMHSDM}, HO-RLS is based on the gradient of $l_n(\cdot)$. To this
end, given two stochastic processes $(\vect{a}_n)_n$ and $(\vect{b}_n)_n$,
define $\forall n\geq n_0$,
$\vect{R}_{ab,n} \coloneqq (1/\Gamma_n) \sum_{\nu=n_0}^n
\gamma^{n-\nu} \vect{a}_{\nu} \vect{b}_{\nu}^{\intercal}$, so that for the
processes $(\vect{x}_n)_n$, $(\vect{y}_n)_n$, and $(\hat{\vect{o}}_n)_n$ under
study, define $\vect{R}_{xx,n}$, $\vect{R}_{yx,n}$, and
$\vect{R}_{\hat{o}x,n}$. Then, the gradient of $l_n$ can be expressed as
$\nabla l_n (\vect{F}) = \vect{R}_{vv}^{-1} \left( \vect{F} \vect{R}_{xx,n} -
  \vect{R}_{yx,n} + \vect{R}_{\hat{o}x,n}\right)$, $\forall\vect{F}$. Following
the arguments of \cite[Algorithm~1 and (5a)]{SFMHSDM}, the previous gradient
information is incorporated into HO-RLS via the mapping
$T_n(\vect{F}) \coloneqq \vect{F} - (1/\varpi_n) \nabla\l_n (\vect{F})$,
$\forall\vect{F}$, with
$\varpi_n \geq \norm{\vect{R}_{vv}^{-1}} \,\norm{\vect{R}_{xx, n}}$, to produce
\cref{algo:ORHORLS}.

Any off-the-shelf solver can be employed to solve the sub-task
in~\cref{Algo:Step:LASSO}. Several solvers are explored here:
\begin{enumerate*}[label=\textbf{\roman*})]
\item The alternating direction method of multipliers
  (ADMM)~\cite{glowinski.marrocco.75, gabay.mercier.76} and
\item the recently developed Fej\'{e}r-monotone hybrid steepest descent method
  (FMHSDM)~\cite{FMHSDM}, which is the deterministic precursor of
  \cite{SFMHSDM}, in the case of $\rho(\cdot) \coloneqq \norm{\cdot}_1$; as well
  as
\item the general iterative shrinkage and thresholding (GIST)
  algorithm~\cite{GIST:13} in the case where $\rho(\cdot)$ takes the form of any
  non-convex surrogate of the $\ell_0$-norm, such as the minimax concave penalty
  (MCP)~\cite{Xiao.robustRLS.15}.
\end{enumerate*}
As it will be seen in \cref{Sec:Numerical.Tests}, only a small number of
iterations of the previous solvers suffice to provide the estimates
$\hat{\vect{o}}_n$ in \cref{Algo:Step:LASSO}.

In \cref{Algo:Step:Prox.initial,Algo:Step:Prox}, the proximal operator is
defined as
$\Prox_{\lambda g}(\vect{F}) \coloneqq \Argmin_{\bm{F}} (1/2) \norm{\vect{F} -
  \bm{F}}_{\text{Fr}}^2 + \lambda g(\bm{F})$, $\forall \vect{F}$, where
$\norm{\cdot}_{\text{Fr}}$ denotes the Frobenius norm. Clearly, if
$g(\cdot) = 0$, then $\Prox_{\lambda g}(\vect{F}) = \vect{F}$. The computational
complexity of \cref{Algo:Step:Prox} depends on the loss $g(\cdot)$. For example,
if $g(\cdot) = \norm{\cdot}_1$, then $\Prox_{\lambda g}(\cdot)$ operates on each
entry of its matrix argument separately, and boils down to the classical
soft-thresholding
mapping~\cite[Example~4.9]{HB.PLC.book}. \cref{Algo:Step:q.p,Algo:Step:varpi}
realize the power method~\cite{Golub:96} to provide running estimates of the
spectral norm $\norm{\vect{R}_{xx, n}}$, and the user-defined
$\epsilon_{\varpi}>0$ helps to provide an overestimate $\varpi_n$ of that
spectral norm.

Unlike the classical RLS (Newton's method in general)~\cite[\S9.8]{SayedBook},
HO-RLS uses $\nabla l_n(\cdot)$ in a way that avoids any matrix inversion
(lemma). The system updates which take place within
\Cref{Algo:Step:q.p,Algo:Step:Prox} of \cref{algo:ORHORLS} show a small
computational footprint, with the main burden being the matrix multiplications
in \cref{Algo:Step:Update.Fhalf}. Multiplications in
\cref{Algo:Step:Update.Fhalf} amount only to
$\vect{R}_{vv}^{-1} (\hat{\vect{F}}_n \vect{R}_{xx,n} - \vect{R}_{yx,n} +
\vect{R}_{\hat{o}x,n})$, since
$\vect{R}_{vv}^{-1} (\hat{\vect{F}}_{n-1} \vect{R}_{xx,n-1} - \vect{R}_{yx,n-1}
+ \vect{R}_{\hat{o}x,n-1})$ is already available from the previous recursion.

\begin{algorithm}[t]
  \DontPrintSemicolon
  \SetKwInOut{input}{Data}
  \SetKwInOut{parameters}{User's input}
  \SetKwInOut{output}{Output}
  \SetKwBlock{initial}{Initialization}{}

  \input{$(\vect{x}_n, \vect{y}_n)_{n\geq 0}$, $\vect{R}_{vv}^{-1}$.} 

  \parameters{$n_0$, $\alpha\in (0.5,1]$, $\lambda \in \RealPP$,
    $\epsilon_{\varpi} \in\RealPP$.}

  \output{Sequence $(\hat{\vect{F}}_n)_{n\geq n_0}$.}

  \BlankLine
  \initial{%

    $( \hat{\vect{F}}_{n_0}, \{\hat{\vect{o}}_{\nu, n_0}\}_{\nu=1}^{n_0} ) \in
    \Argmin\limits_{ (\bm{F}, \{\bm{o}_{\nu}\}_{\nu=1}^{n_0}) }
    \sum\limits_{\nu=1}^{n_0} \!\! \left[ \tfrac{1}{2n_0} \norm{
        \vect{y}_{\nu} - \bm{F}\vect{x}_{\nu} -
        \bm{o}_{\nu}}_{\vect{R}_{vv}^{-1}}^2 + \lambda \rho(\bm{o}_{\nu})
    \right]$. \label{Algo:Step:LASSO.initial}

    $\hat{\vect{o}}_{n_0} \coloneqq  \hat{\vect{o}}_{n_0, n_0}$.

    $\varpi_{n_0} \coloneqq \norm{\vect{R}_{vv}^{-1}} \,\norm{\vect{R}_{xx,n_0}} +
    \epsilon_{\varpi}$.

    $\hat{\vect{F}}_{n_0+1/2} \coloneqq \hat{\vect{F}}_{n_0} - 
    \tfrac{\alpha}{\varpi_{n_0}} \vect{R}_{vv}^{-1} ( \hat{\vect{F}}_{n_0}
    \vect{R}_{x, n_0} - \vect{R}_{yx, n_0} + \vect{R}_{\hat{o}x, n_0})$.

    $\hat{\vect{F}}_{n_0+1} \coloneqq \Prox_{\lambda
      g}(\hat{\vect{F}}_{n_0+1/2})$. \label{Algo:Step:Prox.initial}

  }

  \For{$n = n_0+1$ \KwTo $+\infty$}{%

    $\hat{\vect{o}}_n \in \Argmin\limits_{\bm{o}} \tfrac{1}{2} \norm{
      \vect{y}_{n} - \hat{\vect{F}}_n \vect{x}_{n} -
      \bm{o}}_{\vect{R}_{vv}^{-1}}^2 + \lambda
    \rho(\bm{o})$. \label{Algo:Step:LASSO}

    $\vect{q}_n \coloneqq \vect{R}_{xx,n} \vect{p}_{n-1}$;
    $\vect{p}_n \coloneqq \vect{q}_n / \norm{\vect{q}_n}$. \label{Algo:Step:q.p}

    $\varpi_n \coloneqq \norm{\vect{R}_{vv}^{-1}}\cdot (\vect{p}_n^{\intercal}
    \vect{R}_{xx,n} \vect{p}_n) + \epsilon_{\varpi}$. \label{Algo:Step:varpi}

    $\hat{\vect{F}}_{n+1/2} \coloneqq \hat{\vect{F}}_n + \hat{\vect{F}}_{n-1/2}
    - \hat{\vect{F}}_{n-1} + \tfrac{\alpha}{\varpi_{n-1}} \vect{R}_{vv}^{-1}
    (\hat{\vect{F}}_{n-1} \vect{R}_{xx,n-1} - \vect{R}_{yx,n-1} +
    \vect{R}_{\hat{o}x,n-1}) - \tfrac{1}{\varpi_n} \vect{R}_{vv}^{-1}
    (\hat{\vect{F}}_n \vect{R}_{xx,n} - \vect{R}_{yx,n} +
    \vect{R}_{\hat{o}x,n})$. \label{Algo:Step:Update.Fhalf}

    $\hat{\vect{F}}_{n+1} \coloneqq \Prox_{\lambda
      g}(\hat{\vect{F}}_{n+1/2})$. \label{Algo:Step:Prox}

  }

  \caption{Outlier-robust (OR-)HO-RLS}\label{algo:ORHORLS}
\end{algorithm}

\section{Convergence Analysis}\label{Sec:Convergence}

The convergence analysis of \cref{algo:ORHORLS} is based on the following set of assumptions.

\begin{assumption}\mbox{}
  \begin{asslist}

  \item\label{Ass:Ergodicity}
    $\vect{R}_{xx,n}\limas_n \vect{R}_{xx} \coloneqq \Expect\{\vect{x}_{n'}
    \vect{x}_{n'}^{\intercal}\}$ and
    $\vect{R}_{yx, n}\limas_n \vect{R}_{yx} \coloneqq \Expect\{ \vect{y}_{n'}
    \vect{x}_{n'}^{\intercal}\}$, $\forall n'\geq 1$, where $\limas_n$ denotes
    a.s.\ convergence~\cite{Williams.book.91}.
    
  \item\label{Ass:Uncorrelatedness} Both $(\vect{x}_n)_n$ and $(\vect{v}_n)_n$
    are zero-mean processes, and $(\vect{x}_n)_n$ is independent of
    $(\vect{o}_n)_n$ and $(\vect{v}_n)_n$.

  \item\label{Ass:Bounded.Variance} There exists $C\in\RealPP$ s.t.\ $\Expect\{
    \norm{ \vect{R}_{yx,n} }^2 \} \leq C$, $\forall n$.
    
  \item\label{Ass:Cond.Expectation} With
    $\mathcal{F}_{n} \coloneqq \sigma(\{\hat{\vect{F}}_{\nu}\}_{\nu=n_0}^n)$
    denoting the filtration ($\sigma$-algebra) generated by
    $\{\hat{\vect{F}}_{\nu}\}_{\nu = n_0}^n$, and
    $\Expect_{\given\mathcal{F}_{n}}\{\cdot\}$ being the conditional expectation
    given $\mathcal{F}_{n}$~\cite{Williams.book.91}, assume that
    $\Expect_{\given\mathcal{F}_{n}}\{\vect{R}_{xx,\nu}\} = \vect{R}_{xx}$ and
    $\Expect_{\given\mathcal{F}_{n}}\{\vect{R}_{yx,\nu}\} = \vect{R}_{yx}$,
    $\forall \nu\in \{n_0, \ldots, n\}$.

  \item\label{Ass:g} Consider a sequence $(\vect{Z}_n, \vect{\Xi}_n)_n$ s.t.\
    $(\vect{Z}_n, \vect{\Xi}_n) \in \domain g \times \partial g(\vect{Z}_n)$,
    $\forall n$, where
    $\domain g \coloneqq \Set{\vect{F}\in \Real^{L\times P} \given g(\vect{F}) <
      +\infty}$, $\partial g(\cdot)$ stands for the subdifferential mapping
    of $g$, \ie,
    $\partial g(\vect{Z}) \coloneqq \Set{\bm{\Xi}\in \Real^{L\times P} \given
      g(\vect{Z}) + \trace[(\vect{F}-\vect{Z})^{\intercal} \bm{\Xi}] \leq
      g(\vect{F}), \forall\vect{F} \in \Real^{L\times P}}$, and $\trace(\cdot)$
    denotes the trace of a square matrix. If $(\vect{Z}_n)_n$ is bounded a.s.,
    then $(\bm{\Xi}_n)_n$ is also bounded a.s. Moreover, if
    $(\Expect\{ \norm{\vect{Z}_n}_{\text{Fr}}^2 \})_n$ is bounded, then
    $(\Expect\{ \norm{\bm{\Xi}_n}_{\text{Fr}}^2 \})_n$ is also bounded. \qedhere
  \end{asslist}
\end{assumption}

\cref{Ass:Ergodicity} is motivated by ergodicity arguments~\cite{Petersen}, and
conditions which suffice to guarantee such an assumption can be based on laws of
large numbers through statistical independency or mixing
conditions~\cite{Andrews.88}. Bounded-moment assumptions, such as
\cref{Ass:Bounded.Variance}, appear frequently in stochastic approximation \eg,
\cite[p.~126, (A2.1)]{Kushner.Yin}. \cref{Ass:Cond.Expectation} is a sufficient
condition for the more relaxed and technical one in
\cite[Assumption~6]{SFMHSDM}. Due to space limitations, the stronger
\cref{Ass:Cond.Expectation} is used here. Moreover, many candidate losses
for $g(\cdot)$ satisfy \cref{Ass:g}. Examples are:
\begin{enumerate*}[label=\textbf{\roman*})]
\item the zero loss;
\item the indicator function $\iota_C(\cdot)$, used to enforce closed convex
  constraints $C\subset\Real^{L\times P}$ onto the desired solutions, with
  definition $\iota_C(\vect{F}) = 0$, if $\vect{F}\in C$, while
  $\iota_C(\vect{F}) = +\infty$, if $\vect{F}\notin C$;
\item $\norm{\cdot}_1$; and
\item $\norm{\cdot}^2$.
\end{enumerate*}

The following theorem is a consequence of Corollary~1 in~\cite{SFMHSDM}. The
subsequent proof translates the arguments of \cite{SFMHSDM} into the current
context. 

\begin{thm}\label{main.thm}
  Consider a large integer $n_{\sharp}$ and set $\hat{\vect{o}}_n$ equal to zero
  in \cref{Algo:Step:LASSO} of \cref{algo:ORHORLS}, $\forall n\geq
  n_{\sharp}$. Moreover, set
  $\varpi_n := \varpi \geq \max \{ \norm{\vect{R}_{vv}^{-1}}
  \,\norm{\vect{R}_{xx}}, \norm{\vect{R}_{vv}^{-1}} \,\norm{\vect{R}_{xx,n}}\}$,
  $\forall n > n_{\sharp}$.
  \begin{thmlist}

  \item\label{Thm:g} Under
    \Cref{Ass:Ergodicity,Ass:Uncorrelatedness,Ass:Bounded.Variance,Ass:Cond.Expectation,Ass:g},
    the set of cluster points of the sequence $(\hat{\vect{F}}_n)_{n\geq n_0}$ is
    non-empty, and any of those cluster points solves \eqref{Goal} a.s.

    \item\label{Thm:g=0} In the case where $g(\cdot) = 0$, and if
      $\vect{R}_{xx}$ is positive definite, then under
      \Cref{Ass:Ergodicity,Ass:Uncorrelatedness,Ass:Bounded.Variance,Ass:Cond.Expectation},
      the sequence $(\hat{\vect{F}}_n)_{n\geq n_0}$ generated by
      \cref{algo:ORHORLS} converges a.s.\ to the unknown $\vect{F}_*$. \qedhere

  \end{thmlist}

\end{thm}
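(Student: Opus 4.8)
The plan is to recognize \cref{algo:ORHORLS}, restricted to the iterations $n>n_{\sharp}$, as a concrete instance of the stochastic Fej\'{e}r-monotone hybrid steepest-descent scheme of~\cite{SFMHSDM} and then to invoke its Corollary~1. First I would cash in the two standing hypotheses of the theorem. Setting $\hat{\vect{o}}_n=\vect{0}$ for all $n\geq n_{\sharp}$ leaves only finitely many nonzero summands in $\vect{R}_{\hat{o}x,n}=(1/\Gamma_n)\sum_{\nu=n_0}^{n}\gamma^{n-\nu}\hat{\vect{o}}_{\nu}\vect{x}_{\nu}^{\intercal}$; since those summands are a.s.\ finite and either $\gamma^{n-\nu}\to 0$ (when $\gamma<1$) or $\Gamma_n\to+\infty$ (when $\gamma=1$), one gets $\vect{R}_{\hat{o}x,n}\limas_n\vect{0}$, so the outlier correction enters the recursion only as an asymptotically vanishing, a.s.\ summable perturbation --- the kind of perturbation that~\cite{SFMHSDM} tolerates. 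Fixing $\varpi_n=\varpi$ for $n>n_{\sharp}$ with $\varpi\geq\max\{\norm{\vect{R}_{vv}^{-1}}\norm{\vect{R}_{xx}},\norm{\vect{R}_{vv}^{-1}}\norm{\vect{R}_{xx,n}}\}$ both matches the constant-step regime of~\cite{SFMHSDM} and, via $\varpi\geq\norm{\vect{R}_{vv}^{-1}}\norm{\vect{R}_{xx,n}}$, delivers the deterministic bound $\norm{\vect{R}_{xx,n}}\leq\varpi/\norm{\vect{R}_{vv}^{-1}}$ for $n>n_{\sharp}$, which will later discharge the moment conditions involving $\vect{R}_{xx,n}$.

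Next I would pin down the limiting objects. By \cref{Ass:Ergodicity}, $\vect{R}_{xx,n}\limas_n\vect{R}_{xx}$ and $\vect{R}_{yx,n}\limas_n\vect{R}_{yx}$, so, using $\nabla l_n(\vect{F})=\vect{R}_{vv}^{-1}(\vect{F}\vect{R}_{xx,n}-\vect{R}_{yx,n}+\vect{R}_{\hat{o}x,n})$ from \cref{Sec:Proposed} together with $\vect{R}_{\hat{o}x,n}\limas_n\vect{0}$, the mapping $T_n(\vect{F})=\vect{F}-(1/\varpi)\nabla l_n(\vect{F})$ converges a.s.\ pointwise to $T(\vect{F})\coloneqq\vect{F}-(1/\varpi)\vect{R}_{vv}^{-1}(\vect{F}\vect{R}_{xx}-\vect{R}_{yx})$. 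Here \cref{Ass:Uncorrelatedness} is used: since $\vect{x}_n$ is zero-mean and independent of both $\vect{o}_n$ and $\vect{v}_n$, one has $\Expect\{(\vect{y}_n-\vect{o}_n)\vect{x}_n^{\intercal}\}=\Expect\{\vect{y}_n\vect{x}_n^{\intercal}\}=\vect{R}_{yx}$, and \eqref{data.model} likewise gives $\vect{R}_{yx}=\vect{F}_*\vect{R}_{xx}$. Therefore $\vect{R}_{vv}^{-1}(\vect{F}\vect{R}_{xx}-\vect{R}_{yx})$ is precisely the gradient of the expected loss $l_{\infty}(\vect{F})\coloneqq\tfrac12\Expect\{\norm{\vect{y}_n-\vect{o}_n-\vect{F}\vect{x}_n}_{\vect{R}_{vv}^{-1}}^2\}$ of \eqref{Goal}, whence $\Fix T=\Set{\vect{F} \given \vect{F}\vect{R}_{xx}=\vect{R}_{yx}}=\Argmin l_{\infty}$; this set is non-empty (it contains $\vect{F}_*$) and coincides with the constraint set of \eqref{Goal}, so ``minimize $g$ over $\Fix T$'' is literally problem \eqref{Goal}. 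Finally, since $l_n$ and $l_{\infty}$ are convex quadratics and $\varpi$ dominates the relevant spectral products, the step size (recall $\alpha\in(0.5,1]$) makes every $T_n$ and $T$ averaged nonexpansive with $\Fix T_n=\Set{\vect{F} \given \nabla l_n(\vect{F})=\vect{0}}$ and $\Fix T=\Argmin l_{\infty}$ --- the structural requirement of~\cite{SFMHSDM}.

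With these identifications, \cref{Thm:g} follows by checking the remaining hypotheses of~\cite[Corollary~1]{SFMHSDM}: (a)~a.s.\ pointwise $T_n\to T$, shown above; (b)~conditional unbiasedness $\Expect_{\given\mathcal{F}_n}\{T_n(\vect{F})\}=T(\vect{F})$ at any $\mathcal{F}_n$-measurable $\vect{F}$, immediate from \cref{Ass:Cond.Expectation} since $\Expect_{\given\mathcal{F}_n}\{\vect{R}_{xx,\nu}\}=\vect{R}_{xx}$, $\Expect_{\given\mathcal{F}_n}\{\vect{R}_{yx,\nu}\}=\vect{R}_{yx}$ and $\hat{\vect{F}}_n$ is $\mathcal{F}_n$-measurable; (c)~bounded second moments of the stochastic-approximation error, from \cref{Ass:Bounded.Variance} for the $\vect{R}_{yx,n}$ part and from the deterministic bound on $\norm{\vect{R}_{xx,n}}$ for the $\hat{\vect{F}}_n\vect{R}_{xx,n}$ part, the $\vect{R}_{\hat{o}x,n}$ part being a.s.\ summable; and (d)~the property that $\partial g$ sends a.s.-bounded (resp.\ $L^2$-bounded) sequences to a.s.-bounded (resp.\ $L^2$-bounded) ones, which is exactly \cref{Ass:g} and controls the proximal step $\Prox_{\lambda g}$. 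Feeding (a)--(d) into~\cite[Corollary~1]{SFMHSDM} yields that $(\hat{\vect{F}}_n)_{n\geq n_0}$ is a.s.\ stochastically Fej\'{e}r-monotone relative to $\Argmin l_{\infty}$, hence a.s.\ bounded with a non-empty set of cluster points, each of which solves ``minimize $g$ over $\Fix T$'', i.e.\ \eqref{Goal}, a.s. For \cref{Thm:g=0}, the zero loss trivially satisfies \cref{Ass:g} and $\Prox_{\lambda g}=\Id$, so \cref{Thm:g} applies; if moreover $\vect{R}_{xx}$ is positive definite, then $l_{\infty}$ is strictly convex and $\Argmin l_{\infty}=\Set{\vect{R}_{yx}\vect{R}_{xx}^{-1}}=\Set{\vect{F}_*}$ is a singleton, so the a.s.\ non-empty cluster-point set of the a.s.\ bounded sequence $(\hat{\vect{F}}_n)_n$ must equal $\Set{\vect{F}_*}$, forcing $\hat{\vect{F}}_n\limas_n\vect{F}_*$.

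I expect the main obstacle to be items~(b)--(c): making the reduction to~\cite[Corollary~1]{SFMHSDM} airtight requires matching the way~\cite{SFMHSDM} abstracts the ``observed operator'' --- here the affine map $\vect{F}\mapsto\vect{R}_{vv}^{-1}(\vect{F}\vect{R}_{xx,n}-\vect{R}_{yx,n}+\vect{R}_{\hat{o}x,n})$ evaluated at the $\mathcal{F}_n$-measurable iterate $\hat{\vect{F}}_n$ --- to its precise hypotheses on conditional means and second moments, and verifying that the finitely supported, transient term $\vect{R}_{\hat{o}x,n}$ is genuinely absorbed as a vanishing, summable perturbation that does not spoil the stochastic Fej\'{e}r monotonicity underpinning the whole argument. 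Once that translation is carried out, the two convergence statements are a direct reading of~\cite[Corollary~1]{SFMHSDM}.
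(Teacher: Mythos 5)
Your proposal is correct and follows essentially the same route as the paper: both reduce \cref{algo:ORHORLS} (with $\hat{\vect{o}}_n=\vect{0}$ and constant $\varpi$) to the framework of \cite{SFMHSDM} and verify its assumptions --- ergodicity from \cref{Ass:Ergodicity}, conditional unbiasedness of the stochastic operator from \cref{Ass:Cond.Expectation}, bounded second moments from \cref{Ass:Bounded.Variance}, and the subgradient-boundedness condition from \cref{Ass:g} --- before identifying, for part (ii), the unique minimizer of the constraint loss with $\vect{F}_*$ via the normal equations and \cref{Ass:Uncorrelatedness}. The only quibble is your claim that the residual $\vect{R}_{\hat{o}x,n}$ term is ``a.s.\ summable'': for $\gamma=1$ it decays only like $1/n$, so you should rely on it being asymptotically vanishing rather than summable, which is all the argument requires.
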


\begin{proof}
  \textbf{(i)} According to \cite[Thm.~1 and Cor.~1]{SFMHSDM}, only
  Assumptions~2, 6, 7(ii) and 8 of \cite{SFMHSDM} need to be verified to
  establish \cref{Thm:g}. Assumptions~3 and 4 of \cite{SFMHSDM} are trivially
  satisfied due to the construction of problem \eqref{Goal}. The ergodicity
  Assumption~2 of \cite{SFMHSDM} is satisfied via \cref{Ass:Ergodicity},
  Assumption~7(ii) of \cite{SFMHSDM} is guaranteed by
  \cref{Ass:Bounded.Variance}, and Assumption~8 by \cref{Ass:g}. The proof of
  \cite{SFMHSDM} adapts to the present context via the following mappings: By
  the definition of $T_n$ in \cref{Sec:Proposed}, let
  $Q_n(\vect{F}) \coloneqq T_n(\vect{F}) - (1/\varpi_n) \vect{R}_{vv}^{-1}
  \vect{R}_{yx,n}$, while
  $T(\vect{F}) \coloneqq Q(\vect{F}) + (1/\varpi)\vect{R}_{vv}^{-1}
  \vect{R}_{yx}$, with
  $Q(\vect{F}) \coloneqq \vect{F} - (1/\varpi) \vect{R}_{vv}^{-1} \vect{F}
  \vect{R}_{xx}$ and
  $\varpi \geq \norm{\vect{R}_{vv}^{-1}} \,\norm{\vect{R}_{xx}}$. The
  application now of the conditional expectation
  $\Expect_{\given\mathcal{F}_{n}}\{\cdot\}$ to the terms that appear between
  (19b) and (19c) of \cite{SFMHSDM}, and define $\vartheta_n$, yields
  $\Expect_{\given\mathcal{F}_{n}}\{ \vartheta_n \} = 0$, a.s., and thus
  Assumption~6 of \cite{SFMHSDM} is satisfied by setting $\psi = 0$,
  a.s. Therefore, \cite[Thm.~1 and Cor.~1]{SFMHSDM} establish \cref{Thm:g}.

  \textbf{(ii)} Multiplying both sides of \eqref{data.model} from the right side
  by $\vect{x}_n^{\intercal}$ and by applying $\Expect\{\cdot\}$, it can be
  verified via \cref{Ass:Uncorrelatedness} that
  $\vect{F}_* = \vect{R}_{xy} \vect{R}_{xx}^{-1}$. The loss in the constraint of
  \eqref{Goal} becomes
  $\trace\{ \vect{F}^{\intercal} \vect{R}_{vv}^{-1} \vect{F} \vect{R}_{xx} -
  2 \vect{F}^{\intercal} \vect{R}_{vv}^{-1} \vect{R}_{xy} +
  \vect{R}_{vv}^{-1} \vect{R}_{yy}\}$. Hence, the minimizer
  $\vect{F}_{\text{opt}}$ of this loss satisfies the normal equations
  $\vect{R}_{vv}^{-1} \vect{F}_{\text{opt}} \vect{R}_{xx} = \vect{R}_{vv}^{-1}
  \vect{R}_{xy} \Rightarrow \vect{F}_{\text{opt}} = \vect{R}_{xy}
  \vect{R}_{xx}^{-1} = \vect{F}_*$. Since the minimizer is unique, \cref{Thm:g}
  suggests that all cluster points of $(\hat{\vect{F}}_n)_n$ coincide with
  $\vect{F}_{\text{opt}} = \vect{F}_*$.
\end{proof}

Setting $\hat{\vect{o}}_n$ equal to zero, for all sufficiently large $n$, has
been also used in the convergence analysis of
\cite{Farahmand.robustRLS.12}. Moreover, the assumption on the positive
definiteness of $\vect{R}_{xx}$ in \cref{Thm:g=0} holds true for any regular
stochastic process, \ie, a process with non-zero
innovation~\cite[Prob.~2.2]{Porat.book}.

\section{Numerical Tests}\label{Sec:Numerical.Tests}

To validate the contributions of this paper in methods and theory, extensive
numerical tests on synthetically generated data were conducted versus the
classical RLS~\cite{SayedBook} and the state-of-the-art
\cite{Farahmand.robustRLS.12, Xiao.robustRLS.15}. Two variants of
\cite{Farahmand.robustRLS.12} were employed to tackle~\eqref{Prior.Art.o} with
$\rho(\cdot) \coloneqq \norm{\cdot}_1$:
\begin{enumerate*}[label=\textbf{\roman*})]
\item The OR-RLS(ADMM) that employs ADMM~\cite{glowinski.marrocco.75,
  gabay.mercier.76}, and
\item a coordinate-descent approach OR-RLS(CD-L1)~\cite{Farahmand.robustRLS.12}.
\end{enumerate*}
With regards to \cite{Xiao.robustRLS.15}, two variants were considered to
solve~\eqref{Prior.Art.o} where $\rho(\cdot)$ takes the form of MCP:
\begin{enumerate*}[label=\textbf{\roman*})]
\item The OR-RLS(MCP) that employs the GIST method~\cite{GIST:13}, and
\item its coordinate-descent flavor OR-RLS(CD-MCP)~\cite{Xiao.robustRLS.15}.
\end{enumerate*}
The proposed OR-HO-RLS appears in three flavors, namely OR-HO-RLS(ADMM),
OR-HO-RLS(GIST) and OR-HO-RLS(FMHSDM), depending on the solver of
\cref{Algo:Step:LASSO} in \cref{algo:ORHORLS}. The performance metric is the
normalized root mean squared error
$\text{NRMSE} \coloneqq \norm{ \vect{F}_n - \vect{F}_* }_{\text{Fr}} / \norm{
  \vect{F}_* }_{\text{Fr}}$, where $\vect{F}_n$ stands for the estimate of any
of the employed methods at time $n$, and $\vect{F}_*$ is the unknown system in
\eqref{data.model}. The software code was written in Julia
(ver.~1.0.3)~\cite{Bezanson:julia.17}. The Julia package JuMP~\cite{JuMP:17} was
utilized, along with the Gurobi solver~\cite{Gurobi}, to solve
\eqref{Offline.task} as a warm start for \cite{Farahmand.robustRLS.12,
  Xiao.robustRLS.15} ($\rho(\cdot) = \norm{\cdot}_1$, $n = 500$), and similarly
\cref{Algo:Step:LASSO.initial} of~\cref{algo:ORHORLS} for OR-HO-RLS
($\rho(\cdot) = \norm{\cdot}_1$, $n_0 = 500$). Warm-start iterations and
times are not included in any of the subsequent numbers and figures. An
$100$ independent tests were performed and averaged values are reported. The
parameters of all methods ($\lambda$ for OR-HO-RLS), including those of any
off-the-shelf solver, were carefully tuned s.t.\ optimal performance is achieved
in all scenarios. In all cases, $\alpha \coloneqq 0.5$ and
$\epsilon_{\varpi} \coloneqq \num{5e-2}$ in \cref{algo:ORHORLS}.

In all tests, $P = 20$ and $L = 10$.  In all scenarios, noise $(\vect{v}_n)_n$
is modeled as Gaussian, zero-mean and colored, generated by an autoregressive
(AR) model, where the AR (state) matrix is randomly generated s.t.\ its maximum
singular value is $0.95$. To generate sparse outliers, the entries of
$\vect{o}_n$ are modeled as (also across time) independent and identically
distributed (IID) Bernoulli RVs with parameter $p_o$. Following
\cite{Farahmand.robustRLS.12}, the nonzero entries of $\vect{o}_n$ are drawn
from a uniform distribution with zero mean and variance $1\text{e}4$. The
entries of $(\vect{x}_n)_n$ are modeled as (also across time) IID Gaussian RVs
with zero mean and unit variance.

\Cref{Fig:Dense.20dB,Fig:Dense.10dB} refer to the ``stationary'' case where
$\vect{F}_*$ stays fixed $\forall n$. Matrix $\vect{F}_*$ is considered to be
``dense,'' \ie, with no zero entries. The entries are IID Gaussian RVs with zero
mean and unit variance. \cref{Fig:Dense.20dB} considers the case of
($\text{SNR} = 20\text{dB}$, $p_o = 0.2$), while \cref{Fig:Dense.10dB} the case
of ($\text{SNR} = 10\text{dB}$, $p_o = 0.1$). In these scenarios,
$g(\cdot) \coloneqq 0$ in \cref{algo:ORHORLS}. Moreover, in all stationary
scenarios $\gamma = 1$. All robust techniques outperform the classical
RLS. OR-RLS(ADMM) and OR-RLS(MCP) perform identically to their
coordinate-descent counterparts, while all flavors of OR-HO-RLS outperform all
other outlier-robust schemes. This behavior is also observed in
\cref{Fig:Dense.10dB}, but OR-RLS(MCP) and OR-RLS(CD-MCP) seem to reach the
levels of OR-HO-RLS(GIST) and OR-HO-RLS(FMHSDM). It is also worth noticing here
that OR-HO-RLS(FMHSDM) outperforms OR-HO-RLS(ADMM) even though both FMHSDM and
ADMM solve exactly the same $\ell_1$-norm penalized LS task
in~\cref{Algo:Step:LASSO} of \cref{algo:ORHORLS}.

A sparse stationary $\vect{F}_*$ is examined in \cref{Fig:Sparse.System}. Values
of $1$ are placed randomly at $10\%$ of the entries of $\vect{F}_*$, while the
rest of the entries are zero. Here, $g(\cdot) \coloneqq \norm{\cdot}_1$ in
\cref{algo:ORHORLS}. All curves exhibit similar behavior to that in
\Cref{Fig:Dense.20dB,Fig:Dense.10dB}. In \cref{Fig:Nonstationary}, a sudden
system change is introduced at time $2,500$, by randomly re-initializing
$\vect{F}_*$, to examine how fast the employed algorithms adapt to the
change. Here, $g(\cdot) \coloneqq 0$ in \cref{algo:ORHORLS}. In this scenario,
$\gamma = 0.97$ for all methods. \cref{Fig:Nonstationary} shows that
OR-HO-RLS(FMSHDM) exhibits fast adaptation to the system change while
maintaining the lowest levels of NRMSE among all methods.

\cref{Tab:Run.time} lists the computation times (in secs) per ``iteration'' on
an Intel(R) Xeon(R) CPU E5-2650v4 which operates at $2.20$GHz with $256$GB
RAM. ``Iteration'' refers to a single pass through \eqref{ORRLS} for
\cite{Farahmand.robustRLS.12, Xiao.robustRLS.15}, and to the iterations within
\Cref{Algo:Step:LASSO,Algo:Step:Prox} of \cref{algo:ORHORLS} for OR-HO-RLS. Each
iteration includes also the $100$ recursions of the off-the-shelf solvers which
are employed to generate the estimates $\hat{\vect{o}}_n$ in \eqref{Prior.Art.o}
and in~\cref{Algo:Step:LASSO} of~\cref{algo:ORHORLS}. It can be seen that the
proposed OR-HO-RLS(ADMM) and OR-HO-RLS(FMHSDM) are the fastest solutions among
all methods.

\begin{figure}	
  \centering
  \begin{subfigure}[]{.45\linewidth}
    \centering
    \includegraphics[width=\linewidth]{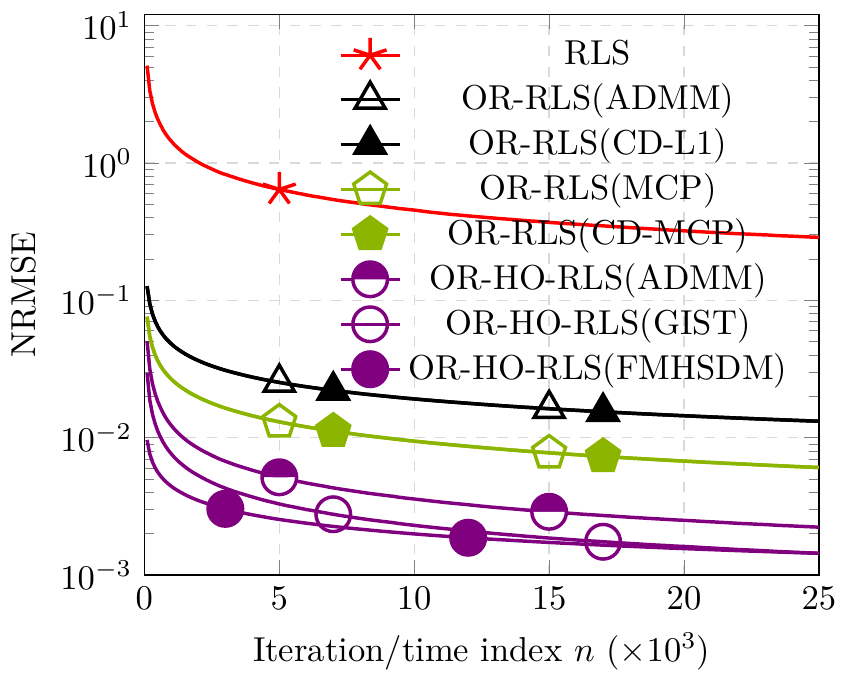}
    \subcaption{Dense and stationary $\vect{F}_*$; $\text{SNR} =
      20\text{dB}$; $p_o = 0.2$.}\label{Fig:Dense.20dB} 
  \end{subfigure}
  \begin{subfigure}[]{.45\linewidth}
    \centering
    \includegraphics[width=\linewidth]{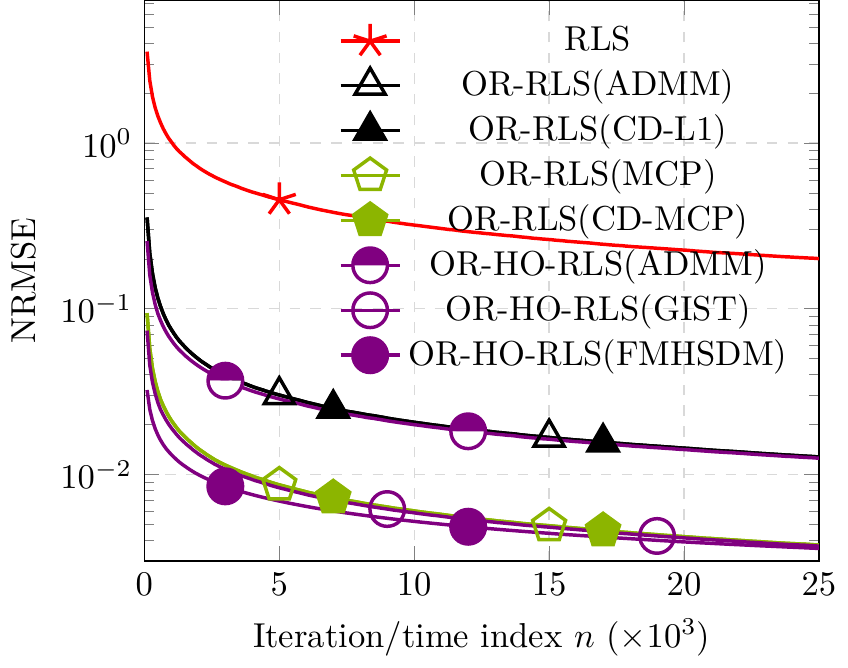}
    \subcaption{Dense and stationary $\vect{F}_*$; $\text{SNR} = 10\text{dB}$;
      $p_o = 0.1$.}\label{Fig:Dense.10dB}
  \end{subfigure}\\
  \begin{subfigure}[]{.45\linewidth}
    \centering
    \includegraphics[width=\linewidth]{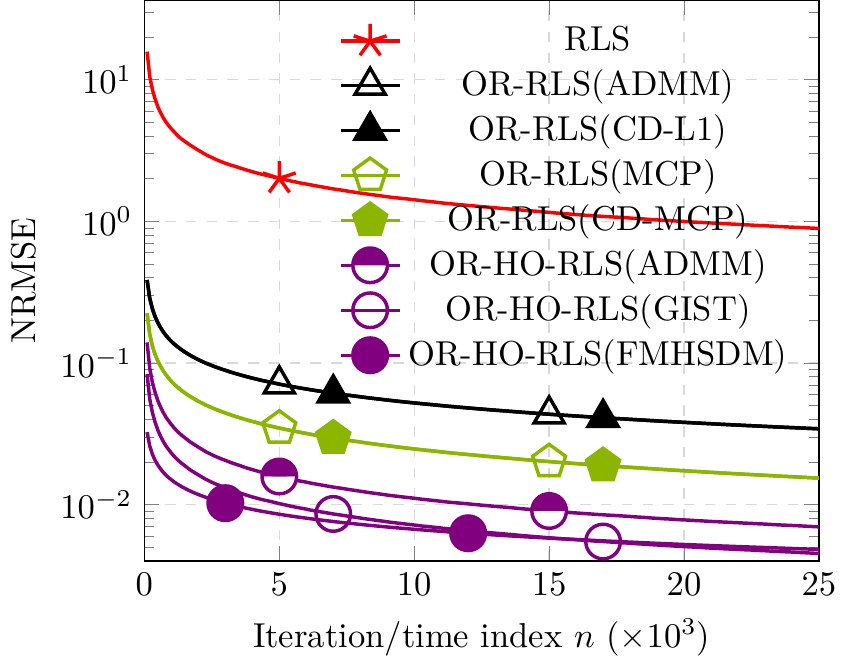}
    \subcaption{Sparse and stationary $\vect{F}_*$.
      $\text{SNR} = 20\text{dB}$; $p_o = 0.2$.}\label{Fig:Sparse.System}
  \end{subfigure}
  \begin{subfigure}[]{.45\linewidth}
    \includegraphics[width=\linewidth]{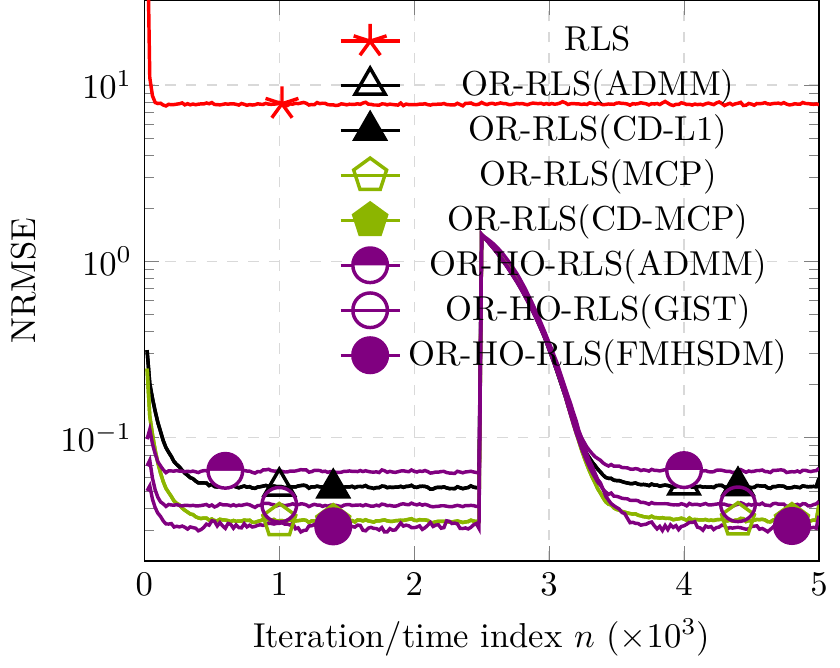}
    \subcaption{Dense and non-stationary $\vect{F}_*$; $\text{SNR} =
      20\text{dB}$; $p_o = 0.2$.}\label{Fig:Nonstationary}
  \end{subfigure}
  \caption{NRMSE values vs.\ iteration / time indices.}\label{Fig:NRMSE}
\end{figure}

\begin{table}
  \begin{center}
      \begin{tabular}{ ll }
        \hline
        Algorithm & Time (secs) / iteration \\
        \hline
        RLS & \num{0.00412} $\pm$ \num{7.85820e-7} \\
        OR-RLS(ADMM) & \num{0.00357} $\pm$ \num{3.07575e-7} \\
        OR-RLS(CD-L1) & \num{0.00747} $\pm$ \num{1.23128e-6} \\
        OR-RLS(MCP) & \num{0.01511} $\pm$ \num{5.78743e-6} \\
        OR-RLS(CD-MCP) & \num{0.00854} $\pm$ \num{1.68219e-6} \\
        \bfseries OR-HO-RLS(ADMM) & \bfseries \num{0.00111} $\pm$ \num{3.94734e-8}
        \\
        OR-HO-RLS(GIST) & \num{0.01240} $\pm$ \num{4.06283e-6} \\
        OR-HO-RLS(FMHSDM) & \num{0.00279} $\pm$ \num{2.52144e-7} \\
        \hline
      \end{tabular}
  \end{center}
  \caption{Average values $\pm$ standard deviations of the run times (in secs)
    per iteration for the scenario of
    \cref{Fig:Dense.20dB}.}\label{Tab:Run.time}
\end{table}



\end{document}